\newtheorem{theorem}{Theorem}
\newtheorem{remark}{Remark}
\def\R{{\mathbb{R}}}
\newcommand{\eps}{\varepsilon}
\title{A Use of Even Activation Functions in Neural Networks}
\author{ \Large \textbf{Fuchang Gao \thanks{Research partially supported by NSF Grant OCA-1940270 and NIH Grant P20GM104420}, Boyu Zhang\thanks{Research supported by NSF Grant OCA-1940270 }}\\ 
Department of Mathematics, University of Idaho\\ 
875 Perimeter Drive, MS 1103\\
Moscow, ID 83844-1103\\
fuchang@uidaho.edu\ \ \ \ \ boyuz@uidaho.edu 
}
\begin{document}

\maketitle

\begin{abstract}
Despite broad interest in applying deep learning techniques to scientific discovery, learning interpretable formulas that accurately describe scientific data is very challenging because of the vast landscape of possible functions and the ``black box" nature of deep neural networks. The key to success is to effectively integrate existing knowledge or hypotheses about the underlying structure of the data into the architecture of deep learning models to guide machine learning. Currently, such integration is commonly done through customization of the loss functions. Here we propose an alternative approach to integrate existing knowledge or hypotheses of data structure by constructing custom activation functions that reflect this structure. Specifically, we study a common case when the multivariate target function $f$ to be learned from the data is partially exchangeable, \emph{i.e.} $f(u,v,w)=f(v,u,w)$ for $u,v\in \mathbb{R}^d$. For instance, these conditions are satisfied for the classification of images that is invariant under left-right flipping. Through theoretical proof and experimental verification, we show that using an even activation function in one of the fully connected layers improves neural network performance. In our experimental 9-dimensional regression problems, replacing one of the non-symmetric activation functions with the designated ``Seagull" activation function $\log(1+x^2)$ results in substantial improvement in network performance. Surprisingly, even activation functions are seldom used in neural networks. Our results suggest that customized activation functions have great potential in neural networks.
\end{abstract}

\section{Background and Theoretic Results}
\noindent
The last decade has witnessed the remarkable successes of deep convolutional neural networks (DCNNs) in analyzing high dimensional data, especially in computer vision and neural language processing, sparking significant interest in applying these tools to prompt physical scientific discovery \cite{carleo2019machine}.
For example, the work \cite{iten2020discovering} about discovering physical concepts using neural networks has drawn much attention recently, along with many other successful examples \cite{raissi2018hidden,de2019discovery}.

While the discovery of scientific laws can range from conceptual and qualitative to precise and quantitative, one common desire is to discover rational and interpretable mathematical formulas that relate target and input variables. To achieve such an ambitious goal, one must integrate existing scientific knowledge into the neural network modeling.
Despite the efforts aimed at the rationality and interpretability of neural networks, DCNNs are still largely viewed as ``black boxes."
The feature sometimes is convenient because it allows users to create models and obtain results without a complete understanding of its inner workings.
However, it also makes it difficult for users to tailor the model for specific applications.
In real-world applications, there is often extra information associated with the data set, which is very important to domain experts.
For example, these might include various constraints between variables, or meta-information about which variables (measurements) are more or less reliable than others.
Inferring the extra information from the current data may be difficult for various reasons, such as model limitations, computational cost, or sampling bias.
For instance, invariance to left-right flipping is essential knowledge for object classification but the knowledge could hardly be learned from limited training data.
Building this kind of knowledge into the model is expected to increase the model's performance for that specific data set. Currently, this is commonly done by data augmentation or by introducing extra terms in the loss function. In this paper, we emphasize that analytically architecting a neural network
based on underlying data structure provides a more efficient approach than customizing loss functions. 
This view is shared by many researchers. For example, Wang and Walters \cite{wang2020incorporating} investigated tailoring deep neural network structures to enforce a different symmetry. It is not a surprise that their methods improved the generalization of the model. On the other hand, by purposefully designing neural network architecture and components and testing their effectiveness on data, we can let machine learning models guide us to discover new scientific laws hidden in the data.

Let us first look at how a neural network works.
Roughly speaking, a neural network uses a nested sequence of compositions of nonlinear functions and linear combinations to approximate a multivariate function.
It is well-known that such compositions can approximate any multivariate continuous function in a bounded region.
For example, Diaconis and Shahshahani \cite{diaconis1984nonlinear} proved that functions of the following forms can approximate any multivariate continuous function on a bounded closed region in $\mathbb{R}^d$:
\begin{align} \label{eq1}
\resizebox{.42 \textwidth}{!}
{
$f(x_1,x_2,\ldots, x_d)\approx \sum_{i=1}^m g_i\left(a_{i1}x_1+a_{i2}x_2+\ldots+a_{id}x_d\right). $
}
\end{align}
They also developed approximation theory, and a necessary and sufficient condition for (\ref{eq1}) to become an exact equality.

Talking about expressing a multivariate continuous function using non-linear functions of linear combinations, one is naturally led to the Kolmogorov-Arnold representation theorem \cite{kolmogorov1957representation}.
Sprecher's version of Kolmogorov-Arnold representation theorem \cite{sprecher1965structure} states that for any continuous function $f$ on $[0,1]^d$, there are $2d+1$ continuous functions on $[0,2+\frac1d)$ such that

\begin{align} \label{eq2}
\resizebox{.42 \textwidth}{!}
{
$f(x_1,x_2,\ldots, x_d)=\sum_{j=1}^{2d+1}\phi_j\left(\sum_{i=1}^d \lambda_i\psi(x_i+ja)\right).$
}
\end{align}
where $\lambda_i$ and $a$ are constants depending only on $d$, and $\psi$ is a continuous monotonic function from $[0,2)\to [0,2)$ independent of the function $f$. Only the outer functions $\phi_j$ depend on the function $f$. In other words, in order to know the multivariate continuous function $f$, one only needs to know the $2d+1$ one-dimensional functions.

One might be tempted to use the Kolmogorov-Arnold representation theorem to apply a transformation $x_i\to \psi(x_i+ja)$ to the original variable $x_i$ then apply a neural network to learn the functions $\phi_j$.
While in theory, one can do that, and indeed it has been studied, cf. \cite{braun2009constructive} , we do not make such a recommendation for two reasons.
First, the function $\psi$ is very irregular.
It is only H\"{o}lder continuous with order $\alpha=(1+\log_2(d+1))^{-1}$, which is worse than a typical sample path of Brownian motion which has H\"{o}lder continuity $\alpha$ for all $\alpha<1/2$. (Recall that a function $f$ is said to have H\"{o}lder continuity of order $\alpha$ if $|f(x)-f(y)|\le C\|x-y\|^\alpha$ for all $x,y$.)
Consequently, even if the original function $f$ is very smooth, the outer functions $\phi_j$ may not be very smooth. Indeed, it is known that for any $k$, there exists a $k$-time differentiable function $f$ such that the corresponding out functions are not differentiable. 
Second, when $d$ is large, the function $\psi$ is not much different from the function $g(x) = \frac{[(2d+2)x]}{2d+2}$. Thus, applying this transform is more like chopping the decimals off the original data.
Nevertheless, the Kolmogorov-Arnold representation theorem draws our attention to the outer functions, which are in fact more important.

In neural networks, such outer functions are approximated by composing activation functions with linear combinations in several layers.
However, the current off-the-shelf neural networks pay little attention to choosing activation functions.
Instead, they simply use one of a few fixed activation functions such as ReLU (i.e., $\max(x,0)$).
While such a practice seems to work well especially for classification problems, it may not be the most efficient way for regression problems.
This greatly limits the potential of the neural networks.

A good choice of activation functions can greatly increase the efficiency of a neural network.
For example, to approximate the function $z=\sin(xy)$, a neural network may use the following step-by-step strategy (\ref{eq3}) to bring $(x,y)^T$ to $\sin(xy)$:
\begin{equation} \label{eq3}
\begin{split}
\left(
    \begin{array}{c}
      x \\
      y \\
    \end{array}
 \right)
 & \stackrel{\rm{linear}}{\longrightarrow}
	\left(
    \begin{array}{c}
      x+y \\
      x-y \\
    \end{array}
  \right)
  \stackrel{(\cdot)^2}{\longrightarrow} \left(
    \begin{array}{c}
      (x+y)^2 \\
      (x-y)^2 \\
    \end{array}
  \right) \\
 & \stackrel{\rm{linear}}{\longrightarrow} \quad \ \ \ xy \quad \ \ \ \stackrel{\sin(\cdot)}{\longrightarrow} \quad \ \ \sin(xy)
\end{split}
\end{equation}
In other words, if one happens to use the non-linear functions $f(t)=t^2$ and $g(s)=\sin(s)$ as the activation function in the first layer and the second layer, respectively, then even the exact function can be discovered. Of course, without knowing the closed form expression of $f$, it is impossible to choose the exact activation functions as exemplified above. However, with some partial information that likely exists from domain knowledge, one may design better activation functions accordingly. This is the goal of the current paper. Let us remark that this analytically design approach is different from the adaptive activation functions approach used in Japtap et al. \cite{jagtap2019locally}, in which both layer-wise and neuron-wise
locally adaptive activation functions were introduced into Physics-informed
Neural Networks (PINNs)  \cite{raissi2019physics}, and showed that such adaptive activation functions lead to improvements on training speed and accuracy on a group of benchmark data sets.

Consider the regression problem of approximating an unknown multivariate continuous function $f(x_1,x_2, \ldots, x_d)$ on a bounded region using a neural network that uses a gradient-based optimizer. For a gradient-based algorithm to work well, the partial derivatived of the function $f$ must exist. Thus, it is necessary that the function $f$ is of bounded Lipschitz, i.e., there exists a constant $L$ such that for all $X = (x_1,x_2,\ldots, x_d)$ and $Y = (y_1,y_2,\ldots, y_d)$ in the bounded region, we have

\begin{equation} \label{eq4}
|f(X)-f(Y)| \le L\sqrt{\Sigma_{i=1}^d(x_i-y_i)^2 }.
\end{equation}
Supposing we have further information such as smoothness and shape-constraints about the function $f$, how can we design the activation functions accordingly so as to improve the neural network performance?

In this paper, we consider the case where $f$ satisfies the relation
\begin{equation}\label{eq5}
f(u,v,w)=f(v,u,w)
\end{equation}
for $u=(x_1,x_2,\ldots, x_k),v=(x_{k+1},x_{k+2},\ldots, x_{2k})\in [-1,1]^k$ and $w=(x_{2k+1}, x_{2k+2},\ldots, x_d)\in [-1,1]^{d-2k}$. For convenience, we say such functions are partially exchangeable with respect to two subsets of variables.
These functions are very common in practice. For example, if the function $f$ depends on the distance between two points of observation in space, then as a multivariate function of the coordinates of these two points (with or without factors), $f$ is partially exchangeable. As another example, if a label of an image is invariant under left-right flipping, then it is partially exchangeable. Similarly, the rotational invariance of 3D structures \cite{thomas2018tensor} and  view point equivalent of human faces \cite{leibo2017view} could also be described using partially exchangeable feature. Indeed, if $x_1,x_2,\ldots, x_m$ is the usual vectorization of the pixels, denote $u=(x_1,x_2,\ldots, x_k)$, $v=(x_{m},x_{m-1},\ldots, x_{m-k})$, where $k=m/2$ if $m$ is even, and $k=(m-1)/2$ if $m$ is odd. Then the label of the image can be expressed as $f(u,v)$ if $m$ is even, and $f(u,v,x_{k+1})$ if $m$ is odd. In either case, the function is partially exchangeable with respect to $u$ and $v$, i.e. $f(u,v)=f(v,u)$ or $f(u,v,x_{k+1})=f(v,u,x_{k+1})$.

A special case is that $f$ also satisfies
\begin{equation} \label{eq6}
f(u,v,w)=f\left(\frac{u+v}{2}, \frac{u+v}2,w\right).
\end{equation}
The latter case is more restrictive and less interesting, and will be called the trivial case. An example of the trivial case is that a function $f$ depends on the middle point of $u$ and $v$ in space, not the actual location of each individual points.

We prove the following result:
\begin{theorem} \label{th1}
Suppose a multi-layer neural network is trained to predict an unknown bounded Lipschitz function $f(x)$ on a region in $\mathbb{R}^d$ that contains the origin as its interior.
Let $\widehat{f}(x)=g\circ a(Wx)$ be the neural network prediction of $f$, where $W \in M_{m\times d}(\R)$ is the weight matrix of the first hidden layer with activation function $a$ and no bias.
Suppose the target function is partially exchangeable with respect to some two subsets of variables.
If $\widehat{f}$ keeps the non-trivial partial exchangeability of $f$ and is non-trivial, then $g\circ a(\cdot)$ is an even function, which can be achieved when $a$ is an even function. 
\end{theorem}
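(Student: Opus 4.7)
The plan is to convert the partial-exchangeability of $\hat f$ into a pointwise functional equation on $g\circ a$ and then exploit the freedom to choose inputs near the origin. First I would partition the first-layer weights column-wise as $W = [W_1 \mid W_2 \mid W_3]$ with $W_1, W_2 \in M_{m\times k}(\R)$ corresponding to the $u$ and $v$ blocks and $W_3\in M_{m\times(d-2k)}(\R)$ corresponding to $w$. Then $Wx = W_1u + W_2v + W_3w$, and the assumed identity $\hat f(u,v,w)=\hat f(v,u,w)$ reads
$$g\bigl(a(W_1u+W_2v+W_3w)\bigr)=g\bigl(a(W_2u+W_1v+W_3w)\bigr).$$
Because $0$ is interior to the region, some open ball $B_\delta(0)$ is contained in it; this ball is invariant under swapping $u$ and $v$, so the equation holds for all $(u,v,w)\in B_\delta(0)$.

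The crucial move is a single specialization: set $w=0$ and $v=-u=t$ with $t\in\R^k$ small. Writing $B:=W_1-W_2$, both sides collapse and the equation becomes
$$g(a(Bt))=g(a(-Bt)) \quad\text{for all small }t,$$
so $g\circ a$ is even on the image of $B$ near the origin. To eliminate the degenerate case, I would note that if $B=0$, then $\hat f(u,v,w)=g(a(W_1(u+v)+W_3w))$ depends on $u,v$ only through $u+v$, hence $\hat f(u,v,w)=\hat f\bigl(\tfrac{u+v}{2},\tfrac{u+v}{2},w\bigr)$; this is precisely the trivial partial exchangeability (\ref{eq6}). Since the hypothesis rules this out, $B\neq 0$ and $g\circ a$ is genuinely even on a nonzero subspace of $\R^m$.

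The final clause of the theorem is then the observation that a componentwise even activation $a$ realizes this automatically and universally: if $a(-s)=a(s)$ for scalars, then $a(-y)=a(y)$ for every $y\in\R^m$, whence $g(a(-y))=g(a(y))$ regardless of $W$. I expect the main subtlety to lie not in any calculation but in the step from ``even on $\mathrm{Im}(B)$'' to ``even as a function of its argument'': the derivation strictly only pins down $g\circ a$ on the column span of $W_1-W_2$, so the theorem is best read as a design principle --- since training may yield essentially any $B$ and the symmetry must hold for all admissible weights, the robust architectural choice is to take $a$ even from the outset, which is exactly what the statement records.
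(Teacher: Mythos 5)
Your proof is correct and follows essentially the same route as the paper: both specialize to $(u,-u,0)$ versus $(-u,u,0)$ (equivalently $v=-u$, $w=0$) to collapse the identity to $g\circ a\bigl((W_1-W_2)t\bigr)=g\circ a\bigl(-(W_1-W_2)t\bigr)$ and conclude evenness of $g\circ a$. In fact you are somewhat more careful than the paper's own argument, which omits both the explicit use of non-triviality to rule out $W_1=W_2$ (your identification of that case with Eq.~(\ref{eq6})) and the caveat that the argument only forces evenness on the image of $W_1-W_2$ rather than on all of $\R^m$.
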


\begin{proof}
Let $\alpha_1, \alpha_2, \ldots, \alpha_k, \beta_1,\beta_2,\ldots, \beta_k, \gamma_1,\gamma_2,\ldots, \gamma_{d-2k}$ be the row vectors of $W$. Denote $u=(x_1,x_2,\ldots, x_k),v=(x_{k+1},x_{k+2},\ldots, x_{2k})$ and $w=(x_{2k+1}, x_{2k+2},\ldots, x_d)$. We have
$$\widehat{f}(u,v,w)=g\circ a(\sum_{i=1}^k\alpha_ix_i+\sum_{i=1}^k\beta_ix_{i+k}+\sum_{j=1}^{d-2k}\gamma_j x_{2k+j}).$$
In particular, from these expressions we have
\begin{align*}
&\widehat{f}(u,-u, 0)=g\circ a(\sum_{i=1}^k(\alpha_i-\beta_i)x_i),\\
&\widehat{f}(-u,u, 0)=g\circ a(-\sum_{i=1}^k(\alpha_i-\beta_i)x_i).
\end{align*}
By the non-trivial partial exchangeability assumption on $\widehat{f}$, we have $\widehat{f}(u,-u, 0)=\widehat{f}(-u,u, 0)$. Thus,
$$g\circ a(\sum_{i=1}^k(\alpha_i-\beta_i)x_i)=g\circ a(-\sum_{i=1}^k(\alpha_i-\beta_i)x_i),$$
which implies that $g\circ a(\cdot)$ is an even function in $\R^m$.
\end{proof}

\begin{remark}
Since the multivariate function $g$ depends on later layers and is typically very complicated, an easy way to ensure that $g\circ a(\cdot)$ be an even function is to choose an even activation function $a$. The evenness of $g\circ a$ can also be achieved by the evenness of $g$, which can be attained by using an even activation function in any fully-connected layers. In particular, for a convolutional neural network, one can achieve this by using an even activation function in the first fully-connected layer. 
\end{remark}

Note that our theorem does not suggest that one should use an even activation function for all the layers. Also, our experiments do not seem to indicate that one can get further benefit from extensive use of even activation functions. Using an even activation function is clearly not a common practice. Indeed, almost all the popular activation functions are not even. As such, users may need to construct their own activation functions. While theory suggests that all non-linear functions would work, their performance may be quite different. Here we suggest the activation function $\log(1+|x|^\alpha)$ for some $\alpha>0$. Note that if $0<\alpha<1$ is used, one needs to slightly modify it to $\log(1+(|x|+\eps)^\alpha)$ to avoid unbounded gradients at $x=0$. The choice of $\alpha$ depends on the nature of the problem. For regression problems with a smooth target function, $\alpha>1$ is suggested by a separate study to be published elsewhere. In the examples below, we use the function $\log(1+x^2)$. The graph of this function looks like the wings of a flying seagull. Thus, for convenience we call it the Seagull activation function in this paper.

\section{Examples}
This section presents the usages of the proposed theorem on both synthetic and real-world data.

\noindent
{\bf Example 1}: Consider function $y=f(u,v,w)$ being the area of the triangle with vertices $u=(x_1,x_2,x_3)$, $v=(x_4,x_5, x_6)$, and $w=(x_7, x_8, x_9)$. Clearly, $f$ satisfies $f(u,v,w)=f(v,u,w)$. In fact, we have the closed formula $f(u,v,w)=\frac12\sqrt{A^2+B^2+C^2}$, where
$$A=(x_4-x_1)(x_8-x_2)-(x_7-x_1)(x_5-x_2),$$
$$B=(x_4-x_1)(x_9-x_3)-(x_7-x_1)(x_6-x_3),$$
$$B=(x_5-x_2)(x_9-x_3)-(x_8-x_2)(x_6-x_3).$$
To discover an approximate formula using a neural network, we randomly sampled 10,000 9-dimensional vectors from $[-2,2]^9$, representing the $(x,y,z)$-coordinates of three points $u, v, w$ in $\R^3$ respectively, and created labels using the formula above. In the same way, we independently generated 2000 vectors and the corresponding labels to form a test set.

We then built several fully-connected neural networks and selected the one with the best overall performance. The selected model was a fully-connected neural network (\emph{i.e.} Multilayer Perceptron) with four hidden layers.
The input layer had 9 nodes and the output layer had 1 node.
Each hidden layer had 100 nodes.
Based on the selected neural network architecture, we built five models using five popular activation functions (ReLU, ELU, sigmoid, tanh, softplus), and trained the models with the RMSProp optimizer for 500 epochs, starting with learning rate 0.003 which was halved every 100 epochs. The batch size was set as 100. To demonstrate the effectiveness of using an even activation function, for each model we replaced the activation function in the first hidden layer by the Seagull activation function $\log(1+x^2)$, while leaving all the remaining parts of the neural networks and hyperparameters unchanged.

We evaluated the performance of the network by Mean Absolute Error (MAE), calculated as follows:
\begin{equation}
MAE = \frac{1}{N}\sum_{j=1}^{N}|y_j-\hat{y_j}|
\end{equation}
where $N$ is the number of testing samples, $y_j$ and $\hat{y_j}$ are the ground truth and the prediction, respectively.

The results are listed in the first row of table \ref{tab:example1}.
As one can see, the Seagull activation function brought significant improvement to the given regression task.

\begin{table*}[t]
\centering
\begin{tabular}{cccccc}
\hline
  & ReLU & ELU & Sigmoid & Tanh & SoftPlus\\
\hline
$f(x)$ & 0.105 \textbf{(0.030)} & 0.059 \textbf{(0.022)} & 0.172 \textbf{(0.022)}  & 0.205 \textbf{(0.047)}  & 0.047 \textbf{(0.020)}\\
 \hline
$\log(1+f(x))$ & 0.032 \textbf{(0.014)} & 0.024 \textbf{(0.012)} & 0.048 \textbf{(0.008)}   & 0.076 \textbf{(0.017)}& 0.018 \textbf{(0.007)} \\ \hline
 $e^{f(x)}/100$  & 0.137 \textbf{(0.059)} & 0.092 \textbf{(0.055)} & 0.254 \textbf{(0.032)}  & 0.225 \textbf{(0.079)} & 0.069 \textbf{(0.041)}                                 \\ \hline
 $\sin(f(x))$ & 0.082 \textbf{(0.027)} & 0.042 \textbf{(0.018)}& 0.106 \textbf{(0.011)}  & 0.169 \textbf{(0.026)} & 0.030 \textbf{(0.019)}         \\ \hline
$\sqrt{\frac{f^2(x)+3}{f(x)+1}}$ & 0.024 \textbf{(0.008)} & 0.015 \textbf{(0.011)}  & 0.072 \textbf{(0.005)}   & 0.054 \textbf{(0.011)}  & 0.011 \textbf{(0.007)}                                \\ \hline
\end{tabular}
\caption{Performance of different activation functions on the regression task. Outside parentheses: MAE for the original model; Inside parentheses: MAE of the model when the first activation function is replaced by Seagull. Each experiment was performed 5 times independently. The standard deviations are small and insignificant to present. \label{tab:example1}}
\end{table*}

To ensure that the improved performance was not due to the specific target function $f$, we further tested the Seagull activation function on the following extensions of the objective function $f(x)$: $\log(1+f(x))$,  $\frac1{100}e^{f(x)}$,  $\sin(f(x))$,  $\sqrt{\frac{f^2(x)+3}{f(x)+1}}$.
The performance of the neural network with different activation functions are summarized in table \ref{tab:example1}, rows 2-6. The results are consistent with that for the target function $f(x)$.

Without changing the neural network architecture or the hyper parameters, we further tested the models with an intrinsically different target function, this time using the solid angle formed by the three vectors $u=(x_1,x_2,x_3)$, $v=(x_4,x_5, x_6)$, and $w=(x_7, x_8, x_9)$ on the unit sphere. The target function $f$ is has a closed formula
\begin{align*}
f(u,v,w) = \left\{\begin{array}{ll}2\tan^{-1} z& z\ge 0\\\pi+2\tan^{-1} z& z<0\end{array}\right.
\end{align*}
where $$z=\frac{|(u\times v)\cdot w|}{1+u\cdot v+v\cdot w+w\cdot u}.$$
(Note that the denominator on the right-hand side above can be 0.) Training and test data were randomly sampled from the unit sphere. The neural network architecture and hyper parameters are specified above. When we use ReLU activation function,  the best MAE is 0.108. Replacing the activation function in the first layer by the Seagull activation function $\log(1+x^2)$, the best MAE is reduced to $0.080$. Increasing the training set to $50000$ data points, the MAEs reduce to $0.086$ for ReLU and $0.043$ for Seagull function, respectively; a single use of an even activation function leads to a $50\%$ improvement.

In real-world applications, data always contains noise. To test the effectiveness of our method on noisy data, we added random noise of mean $0$ and $5\%$ standard deviation onto the training labels. The results in table \ref{tab:noise} demonstrate the robustness and effectiveness of the even activation function.

\begin{table*}[t]
\centering
\begin{tabular}{cccccc}
\hline
  & ReLU & ELU & Sigmoid & Tanh & SoftPlus\\
\hline
$f(x)$ & 0.126 \textbf{(0.054)} & 0.078 \textbf{(0.043)} & 0.159 \textbf{(0.035)}  & 0.236 \textbf{(0.081)}  & 0.059 \textbf{(0.032)}\\
 \hline
$\log(1+f(x))$ & 0.041 \textbf{(0.020)} & 0.027 \textbf{(0.018)} & 0.042 \textbf{(0.013)}   & 0.087 \textbf{(0.025)}& 0.022 \textbf{(0.012)} \\ \hline
 $e^{f(x)}/100$  & 0.160 \textbf{(0.106)} & 0.123 \textbf{(0.090)} & 0.185 \textbf{(0.054)}  & 0.266 \textbf{(0.136)} & 0.094 \textbf{(0.056)}                                 \\ \hline
 $\sin(f(x))$ & 0.092 \textbf{(0.034)} & 0.056 \textbf{(0.027)}& 0.271 \textbf{(0.017)}  & 0.174 \textbf{(0.040)} & 0.032 \textbf{(0.021)}         \\ \hline
$\sqrt{\frac{f^2(x)+3}{f(x)+1}}$ & 0.026 \textbf{(0.010)} & 0.018 \textbf{(0.012)}  & 0.038 \textbf{(0.011)}   & 0.054 \textbf{(0.017)}  & 0.013 \textbf{(0.007)}                                \\ \hline
\end{tabular}
\caption{Performance of different activation functions on the regression task with added noise. Outside parentheses: MAE for the original model; Inside parentheses: MAE of the model when the first activation function is replaced by Seagull. Each experiment was performed 5 times independently. The standard deviations are small and insignificant to present. \label{tab:noise} }
\end{table*}

While it is possible that a more carefully designed neural network with longer training and finer tuning of the learning rates may produce predictions with smaller MAE errors, the effect of using an even activation function in one of the layers is evident from Example 1.

{\bf Example 2}
The results of Example 1 demonstrated that for low-dimensional (9-dimensional) partially exchangeable target functions, the use of an even activation function improved network performance. To further evaluate its effectiveness on high-dimensional partially exchangeable target functions, we performed experiments on CIFAR 10, a popular data set that consists of $60000$ $32\times32$ color images in $10$ classes, with $6000$ images per class.
More details about the CIFAR-10 data set could be found in \cite{krizhevsky2009learning}

The following DCNNs were tested: Regnet \cite{radosavovic2020designing}, Resnet \cite{he2016deep}, VGG \cite{simonyan2014very}, and DPN \cite{chen2017dual}.
The details of each of these networks vary. In particular, we employ Regnet with 200 million flops (Regnet200MF), 50-layer Resnet (Resnet50), VGG net with 13 convolutional layers and 3 fully connected layers (VGG16), and DPN-26.
There are different ways to introduce an even activation function into the DCNNs.
Specifically, we replaced the ReLU at the output of each block for Regnet200MF, Resnet50, and VGG16.
Taking Resnet50 as an example, each block in the Resnet50 consists of a few weighted layers and a skip connection.
We replaced the last ReLU, which directly affects the output of the block, with Seagull activation function $\log(1+x^2)$ and left the rest of the activation functions unchanged.

For training setup, 50,000 images were used as training samples and 10,000 images for testing.
The total epoch number was limited as 350.
We selected the SGD optimizer with 0.9 momentum and set the learning rate as 0.1 for epoch $[0, 150)$, $0.01$ for epoch $[150, 250)$, and $0.001$ for epoch $[250, 350)$.
All training was started from scratch.
Considering the random initialization of the DCNNs, each training process was performed 5 times, and the average accuracy on testing data and standard deviation were summarized in Figure \ref{fig:SeagullvsReLU}.

\begin{figure}[t]
\centering
\includegraphics[width=1.05 \columnwidth]{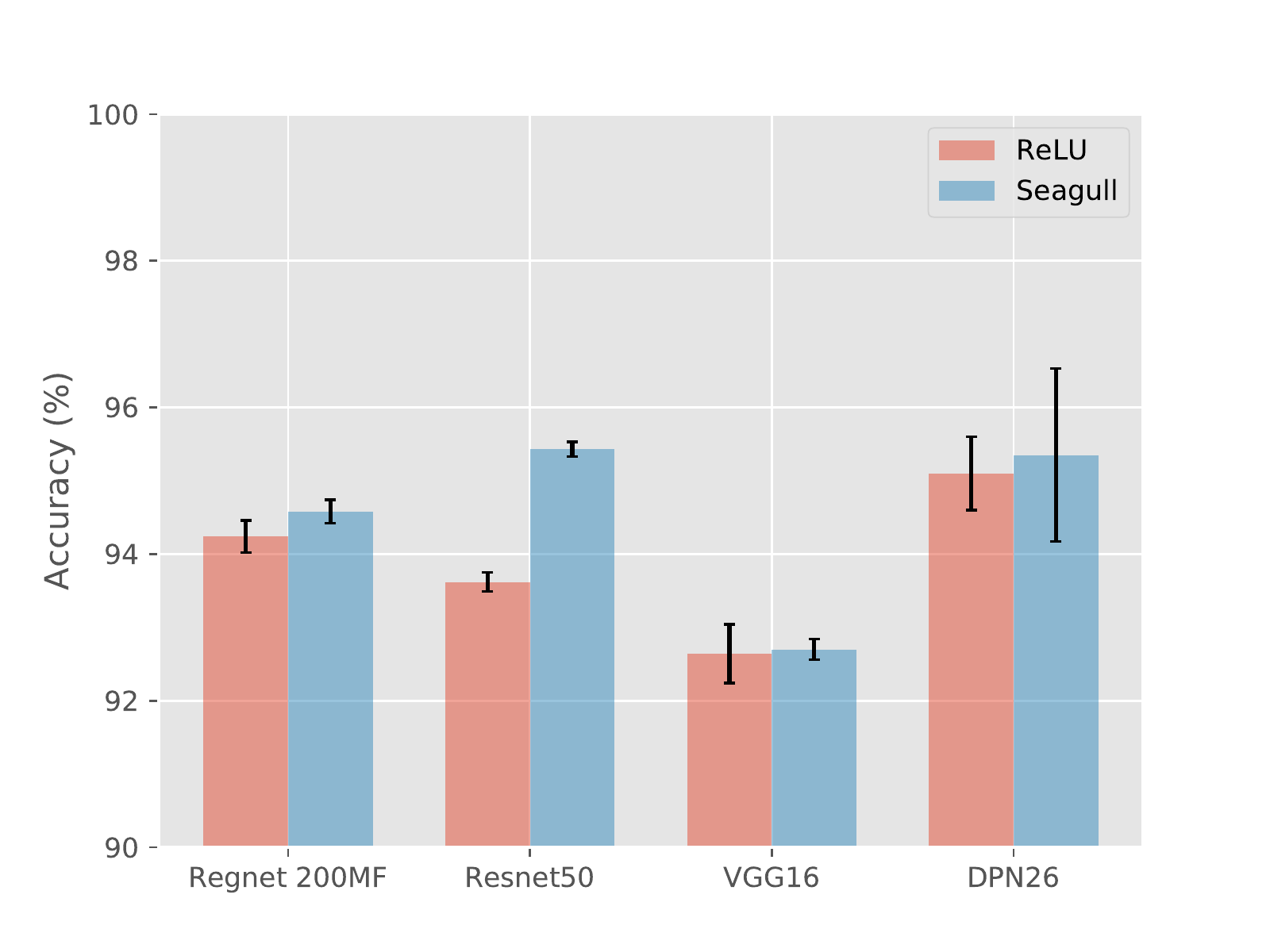} 
\caption{Performance of Seagull activation function and ReLU on CIFAR-10 under different neural network structures. The red bar reflect the average accuracy of ReLU and the blue bar for the Even activation function on CIFAR-10 data set, respectively. The black lines on the top of the bars present the standard deviation of the accuracy.}
\label{fig:SeagullvsReLU}
\end{figure}

The results in Figure \ref{fig:SeagullvsReLU} reveal some interesting information.
The Seagull function presents significant improvement compare to the ReLU function on all three DCNNs, despite some variation.
The feature maps in the network retain the partial exchangeability of the images.
Considering the DCNN structures, the convolution operation and activation functions leave the spatial relationship unchanged as well as the partial exchangeability.
According to Theorem \ref{th1}, the even activation function could capture this feature better compared to ReLU.
While one might argue the rotation and flip invariance could be achieved by data augmentation and filter rotation \cite{gao2017efficient}, using the proposed Seagull function would provide the same feature, as well as significant reduction of the network size and training time.

For DPN-26, a different strategy was selected.
First, we train 300 epochs on the original DPN-26 from scratch. Then we insert a fully-connected layer with an even activation function and trained the neural network with the same hyper parameters for 300 epochs.
The result shows noticeable improvements: For the network without using even activation functions, the average accuracy reached $95.10\%\pm 0.11\%$, whereas the network using an even activation function reached $95.35\% \pm 0.04\%$.
The $95.35\%$ accuracy also outperformed DPN-92, which consists of many more parameters.
This substantial improvement further demonstrates the effectiveness of using even activation functions for partially exchangeable target functions.

\section{Generalization}
The partially exchangeable assumption is just one of the many common algebraic assumptions one can make on the target functions. For example, one may consider the case when the target function satisfies partially anti-exchangeable assumption, i.e., $f(u,v,w)=-f(v,u,w)$. Such functions are also common. For example, if $f$ is the determinant of a matrix with $u$ and $u$ as its two row vectors, $f$ is partially anti-exchangeable. In such a case, it is not difficult to see from the proof of Theorem 1 that the function $g(a(\cdot))$ is odd. One may achieve this goal by using odd activation functions in all the layers. We will not discuss this case further because odd activation functions have been widely investigated.

Another common yet challenging case for which activation functions can play a key role is when the target function satisfies $f(tu,v)=t^\alpha f(u,v)$ for some $\alpha>0$. The solution to such a case requires very different techniques and is beyond the scope of this paper.

In the same spirit, but using different technical tools, one can also study the case where not algebraic relations but rather analytic properties are assumed. For example, one may have prior knowledge or a hypothesis that the unknown function is not only of bounded Lipschitz, but also of bounded mixed derivatives. In this case, the careful design of activation functions is also of great potential. For example, by local convexity analysis, we can glue two existing activation functions to further improve the neural network performance on all the tasks in Example 1. The work will be published elsewhere.

\section{Conclusion}
In this paper, we emphasized the importance of studying activation functions in neural networks.
We theoretically proved and experimentally validated on synthetic and real-world data sets that when the target function is partially exchangeable, adding an even activation function into the network structure can significantly improve neural network performance.
Through a special yet commonly encountered case, these results demonstrate the great potential of investigating activation functions.

\section{Discussion}
It is not a surprise that the designated ``Seagull" activation function outperformed ReLU on the CIFAR-10 data set when applied on the fully connected layer because the image classification is partially exchangeable, and the Seagull activation function is an even function. However, when Seagull is used on the intermediate layers where the neurons are not fully connected yet, the neural network performance on CIFAR-10 data set also improves. It seems that partial exchangeability exists globally and locally in a certain sense. An alternative explanation is that the even activation function could effectively capture useful information from the features that otherwise are cut off by ReLU.

\bibliography{ref.bib}

\end{document}